\documentclass[runningheads]{llncs}
\usepackage[T1]{fontenc}
\usepackage{graphicx}
\usepackage{amsmath,amssymb,amsfonts}

\usepackage{multirow} 
\usepackage{csquotes} 
\usepackage{mathtools} 
\usepackage{subcaption} 
\usepackage{booktabs} 
\usepackage{longtable} 
\usepackage{latinabbrv} 
\usepackage{hyperref}
\usepackage{color}

\usepackage{xcolor}
\definecolor{tabblue}{RGB}{31,119,180}

\usepackage[capitalize]{cleveref}
\crefname{section}{Sec.}{Secs.}
\Crefname{section}{Section}{Sections}
\Crefname{table}{Table}{Tables}
\crefname{table}{Tab.}{Tabs.}
\crefname{lem}{Lem.}{Lems.}
\Crefname{lem}{Lemma}{Lemmas}
\crefname{ass}{Assum.}{Assums.}
\Crefname{ass}{Assumption}{Assumptions}

\newcommand{\Bcal}{\mathcal{B}}
\newcommand{\Fcal}{\mathcal{F}}
\newcommand{\Tcal}{\mathcal{T}}
\newcommand{\Symbs}{\Sigma}
\newcommand{\Sdef}{\Symbs_\mathrm{def}}
\newcommand{\dhat}{\widehat{d}}
\newcommand{\nhat}{\widehat{n}}
\newcommand{\R}{\mathbb{R}}
\newcommand{\Z}{\mathbb{Z}}
\newcommand{\indic}{\mathbf{1}}
\newcommand{\indep}{\perp}
\DeclareMathOperator{\cosdist}{CosineDist}
\DeclareMathOperator{\ICA}{ICA}
\DeclareMathOperator{\Prob}{\mathbb{P}}
\DeclareMathOperator{\Bin}{Bin}
\newcommand{\rmm}{\mathrm{rm}}
\newcommand{\itt}{\mathrm{it}}

\newcommand{\abs}[1]{\lvert#1\rvert}

\newtheorem{lem}{Lemma}
\newtheorem{ass}{Assumption}

\begin{document}
\title{Theatre Chapbooks At Scale: A Statistical Comparative Analysis of Typography\thanks{D.\ Belzarena and S.\ Mowlavi contributed equally to this work.}
}
\titlerunning{A Statistical Comparative Analysis of Typography}
%

\author{Diego Belzarena\inst{1,2}\orcidID{0009-0006-0535-7246} \and
Seginus Mowlavi\inst{1}\orcidID{0009-0005-7774-9217} \and
Paula Casariego Casti\~neira\inst{3}\orcidID{0000-0003-0527-2454} \and
Alejandra Ulla Lorenzo\inst{4}\orcidID{0000-0002-8137-9969} \and
Gregory Randall\inst{2}\orcidID{0000-0001-7911-2977} \and
Jean-Michel Morel\inst{5}\orcidID{0000-0002-6108-897X}
}
 \authorrunning{D. Belzarena, S. Mowlavi et al.}
 \institute{
 Universit\'e Paris-Saclay, ENS Paris-Saclay, Centre Borelli, France \and
 IIE, Facultad de Ingenería, Universidad de la Rep\'ublica, Uruguay \and
 Universit\`a Roma Tre, Italia \and
 Universidade de Santiago de Compostela, Espa\~na \and
 Lingnan University, Hong Kong
 }
\maketitle              
\begin{abstract}
    We propose a statistical methodology that quantifies the similarity of typefaces between printed historical books. This provides a tool that accelerates philological analysis. Using character prototypes derived from clustering and aligning automatically extracted character images, the method defines a typeface distance between any two books. To produce actionable outputs, we develop an \emph{a contrario} statistical framework to interpret the significance of the computed typeface distances. We apply the method to the philological study of $17^\mathrm{th}$-century Spanish printed theatre chapbooks in a quantity that exceeds the capabilities of systematic visual inspection by human experts. Our method enables the automatic comparison of Roman and Italic types extracted from different books. After validation by human experts, our method has led to new printer attributions being discovered, and former printer attributions being revised. This success strongly suggests that our method has the potential to enable digital bibliography on a larger scale than was previously possible.
    \keywords{Digital bibliography  \and Historical documents \and Statistics \and Character prototype.}
\end{abstract}

\section{Introduction}

Many printers in the Iberian Peninsula in the 17th century tried to evade legal controls on book publishing by publishing without, or with false, editorial data, as evidenced by the fact that of the 78,524 volumes published in this territory during that century, 40,952 do not bear the name of the printer. With regard to printed theatre books, this situation was particularly pronounced in one of the most widespread publishing genres on the Peninsula during this period: the \emph{sueltas} or \emph{impresos sueltos} (theatre chapbooks), printed in quarto format, popular in nature, short in length, and of low material quality. The lack of correct bibliographical identification of this wide range of testimony is one of the great challenges in the study of the Spanish Golden Age theatrical heritage, since it has serious consequences for our knowledge of the seventeenth-century book trade and the role of this type of printed matter in the circulation of theatrical work~\cite{cruickshank1985,cruickshank1989,cruickshank1991,ulla2020a}.

The methodology used to identify these editions continues, to date, to follow a manual procedure based on the analysis of the typography and, if available, the ornamentation of each theatre chapbook, the results of which must be compared with the typography, graphic architecture, and ornamentation of printed books with unquestionable bibliographic data. Individual editions must be related to other printed materials with which they share significant common characteristics to establish a network of typographical relationships~\cite{wilson&cruickshank1980,casariego2023}.

In this paper, we take a step toward automating this methodology. Specifically, we propose an algorithm to estimate the degree to which two books share their typographical patterns, which is applicable to the analysis of roman and italic typefaces independently. Our approach adapts image processing techniques originally developed for unsupervised OCR post-processing~\cite{belzarena2025} to extract character prototypes from each book. It then measures typeface similarity by comparing these prototypes, applying an \emph{a contrario} analysis to ensure the statistical significance of the results. This analysis takes into account the complexity of the ground situation, where there is no equivalence between``same printer'' and ``same typeface'', typically due to type sharing across printers through trade or common sourcing. Our method, therefore, focuses on the degree of typeface similarity and stops short of attribution, which we leave to the expert.

By design, our method produces \emph{interpretable} and \emph{actionable} results. They enable efficient hypothesis generation, allowing experts to focus on assessing their validity; they also provide a global view of the relationships between editions. As our method does not rely on the availability or production of any training data, it enables philologists and other specialists to apply their expertise on an unprecedented scale.

The contributions of this work can be summarized as follows.
\begin{itemize}
    \item
    We propose a method to estimate the proximity of typefaces between books within a corpus, with a framework to statistically interpret measurements. Building on previous work in automatic printed character modeling, it is designed to meet the requirements of philological practice. In particular, it is able to detect italic characters and process roman and italic types independently. The code is available at \url{https://github.com/diegobelzarena/sueltas-typography}.
    \item
    We conduct experiments on two corpora of 17th-century \emph{sueltas}. Detailed philological analyses are performed on situations of interest identified by our method's results. This leads us to multiple discoveries, including the new attribution of several chapbooks to known printers, as well as the revision of some erroneous associations. We corroborate all new findings by following the traditional methodology.
    \item
    The scale at which this analysis is conducted is moderate (about 100 documents), but already far above the capability of a human expert inspection, which would have required about 5000 pairwise and very detailed comparisons.  Due to this positive result, we are in a position to scale up the process by acquiring thousands of scanned chapbooks from several European libraries.
\end{itemize}

This article is organized as follows. \Cref{sec:relatedwork} presents academic works that tackle similar issues or are built around similar ideas. \Cref{sec:method}  describe our method. \Cref{sec:experiments} provides all context and details concerning our experiments. \Cref{sec:discussion} reports the philological analyses, including the new findings and their justifications. Finally, \Cref{sec:conclusion} draws conclusions about our work.

\section{Related work} \label{sec:relatedwork}

The tasks of automatic localization and authorship attribution of historical documents have spurred research into computational approaches to \emph{paleography} (for manuscripts) and \emph{bibliography} (for printed works) \cite{kesmont2017,sommerschield2023}.

From the perspective of document image analysis and paleography, several systems have aimed to model character-level variation. Aiolli \etal~\cite{aiolli1999} proposed an early framework for paleographic inspection of medieval manuscripts based on internal redundancy within each document. The method requires extensive manual character segmentation, which limits scalability to large corpora. More recently, Vlachou-Efstathiou \etal~\cite{malamatenia2024} adapted a generative deep-learning document analysis model \cite{the-learnable-typewriter} for the comparison of Gothic manuscript subtypes. Although effective in capturing handwriting variability, best results are achieved through supervised training on labeled text lines.

In the field of digital bibliography focused on early modern print, Ryskina et al.~\cite{ryskina2017} used orthographic and spacing features to attribute compositors in Shakespeare's First Folio (1623). A closer parallel to our work is the approach by Vogler et al.~\cite{vogler2023}, who detect and match uniquely damaged sorts to identify printers in early modern English books—an approach in line with long-standing bibliographical practice, particularly in the study of 17th-century Spanish chapbooks~\cite{wilson1975}. However, this method is limited by its reliance on the accidental presence of damaged types, making it unsuitable for large-scale, generalizable comparison of the underlying shapes of complete round and italic typefaces.

Similarly, Vogler et al.~\cite{vogler2024} proposed a novel computational method to analyze running titles in early modern books. By clustering these titles, they track deviations from expected printing patterns, offering insights into variations in individual book production. However, this method focuses on a single type feature within a book and does not align with our goals, as our approach seeks to compare full typefaces across different books and texts to identify printers across large collections.

\section{Method} \label{sec:method}

We propose a fully automatic method for estimating the proximity of typefaces between printed documents in a given corpus, which does not rely on annotated training data and enables large-scale processing. Although it can be generalized to many corpora, we were motivated by the study of the 17th-century \emph{sueltas}; in this context, we refer to a document as a \emph{book}.

Roman and italic types are processed separately. Given a choice of typographic style, the method provides, for each pair of books, a statistically significant lower bound $\nhat_1$ of \textbf{the number of symbols for which the books use the same type}. This bound can be interpreted as follows: ``while some resemblances of typefaces may arise by chance, the observed similarities would be unlikely to occur unless the two books shared the same type for at least $\nhat_1$ symbols''.

Our approach follows a two-part methodology: measurement and statistical interpretation. Leveraging existing algorithms, the initial phase extracts character prototypes from each book to compute cross-book distances. We then introduce a novel application of the \emph{a contrario} framework to interpret these measurements.

We write $\Bcal$ for the set of books (\ie the corpus) and $\Symbs$ for the set of symbols (\ie the alphabet).

\subsection{Measuring typeface distances} \label{ssec:measurements}

The first step of the method is to compute, for each pair of books $b_1,b_2\in\Bcal$ and each symbol $s\in\Sigma$, a number $d_s(b_1,b_2)\in\R_{>0}$ which should be close to zero if and only if the type of $s$ in the chosen typographic style (roman or italic) is the same for the two books. This step is implemented as a sequential combination of established image processing techniques and algorithms.

\begin{figure}[ht]
    \centering
    \subfloat[]{\label{subfig:method1}\includegraphics[height=3.0cm]{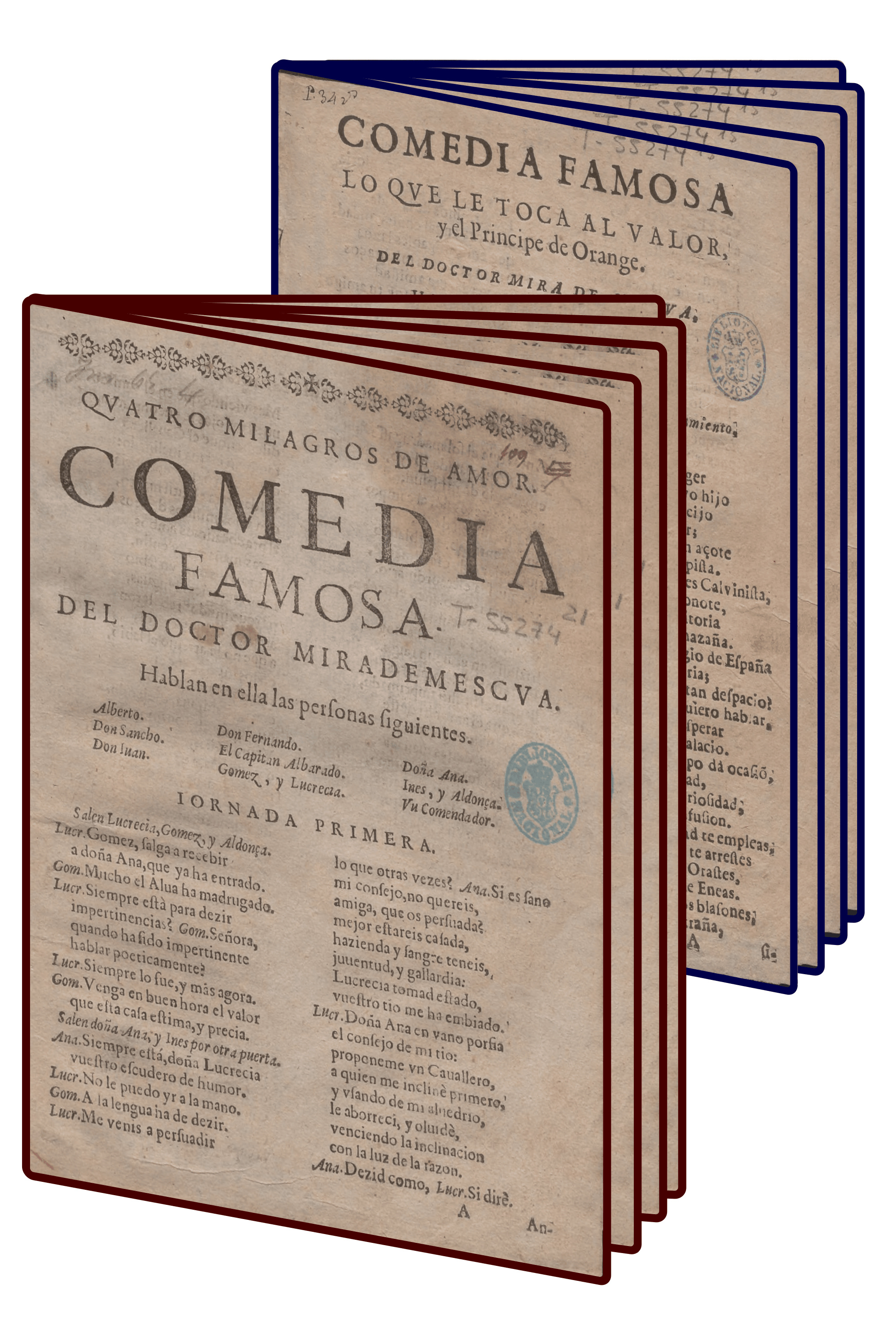}}
    \quad
    \subfloat[]{\label{subfig:method2}\includegraphics[height=3.0cm]{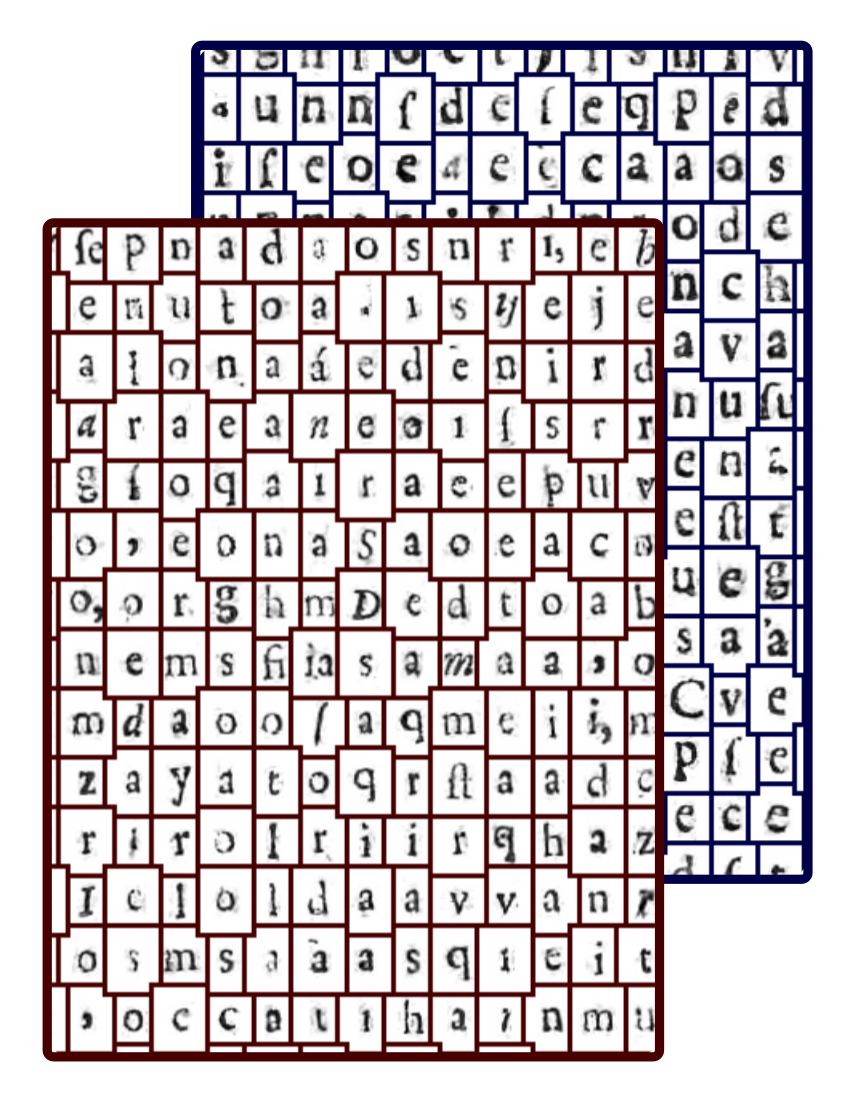}}
    \quad
    \subfloat[]{\label{subfig:method3}\includegraphics[height=2.8cm]{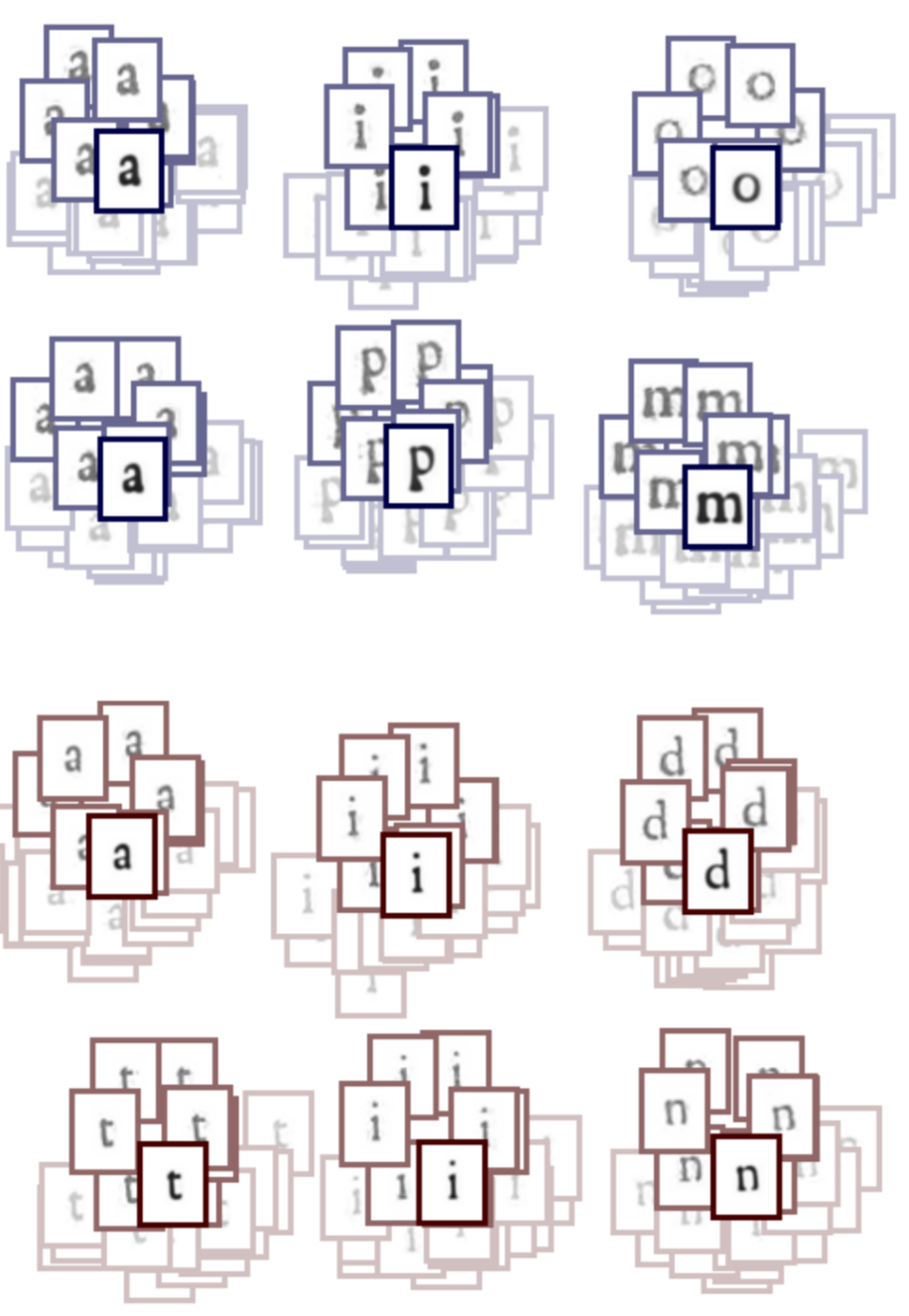}}
    \quad
    \subfloat[]{\label{subfig:method4}\includegraphics[height=3.0cm]{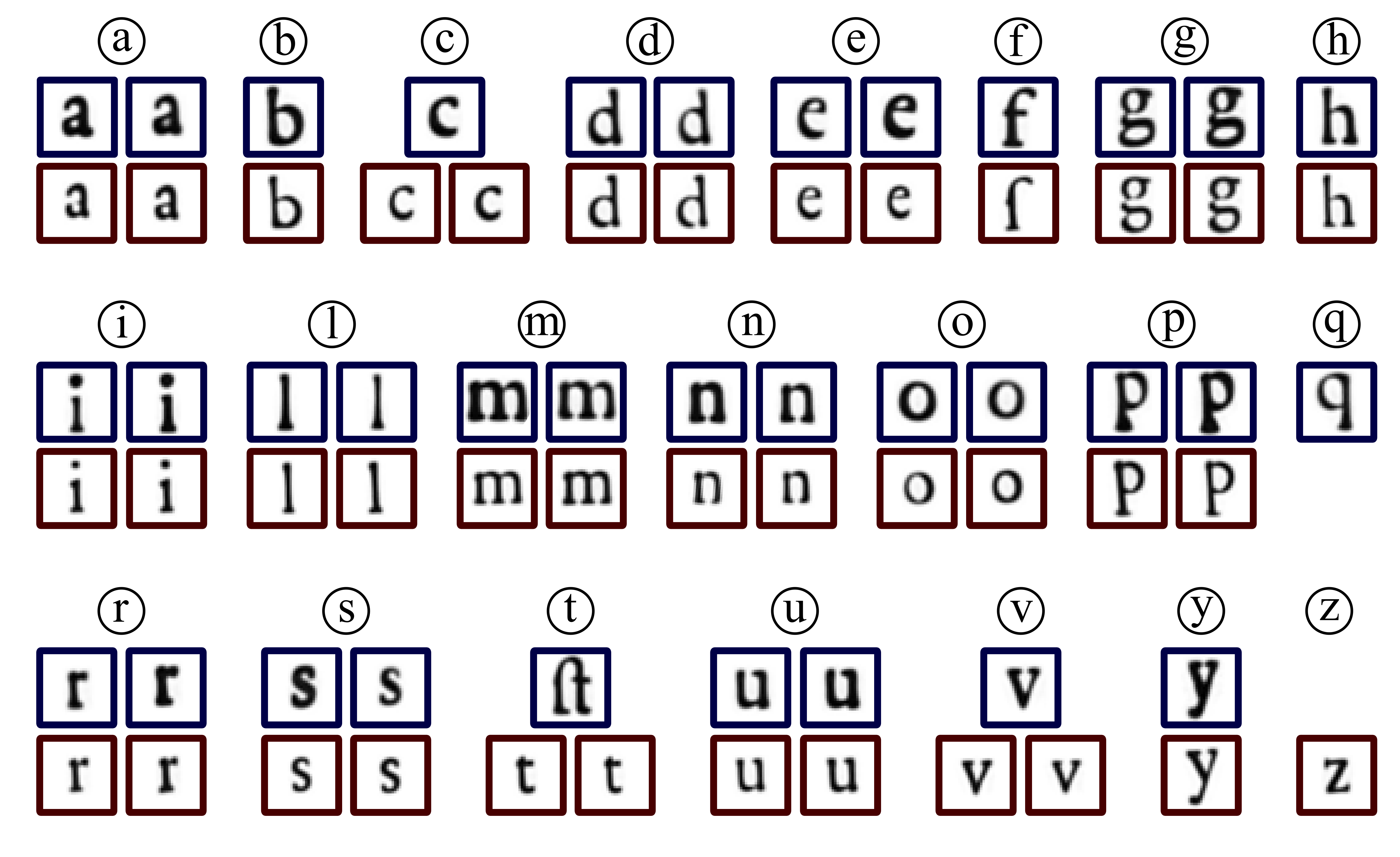}}
    \caption{First step of the method: book-specific character prototype extraction. Each book is processed independently. \protect\ref{subfig:method1} Input, an entire book/\textit{suelta}. \protect\ref{subfig:method2} Corrected character bounding boxes extracted with OCR, of the chosen type (roman or italic). \protect\ref{subfig:method3} Character clustering. \protect\ref{subfig:method4} Book-specific character prototypes.} 
    \label{fig:method}
\end{figure}

\paragraph{Character extraction}

We apply an off-the-shelf OCR method, namely CharNet~\cite{xing2019charnet}, to extract labeled character bounding boxes. In parallel, roman \vs italic style classification is performed at the word level based on the difference between slant and skew: the slant angle of a word is estimated as the principal orientation of the structure tensor over a word's bounding box, and compared with the page's skew angle, which we estimate semi-globally on tiles using the Fourier method~\cite{postl1986}. Only character boxes matching the desired font type are retained for subsequent processing.

\paragraph{Character prototypes}

Following the method proposed by Belzarena~\etal~\cite{belzarena2025}, specifically the steps in Sections 3.1--3.3, retained bounding boxes are corrected, and characters are cropped into $40\times32$ images (see~\cref{subfig:method2}) then jointly aligned and clustered in an unsupervised manner. This produces a number of unlabeled clusters (see~\cref{subfig:method3}), which can be noisy for various reasons, such as low-quality images, incorrect labeling, improper segmentation, or imperfect processing.

To confidently assign a character label to each cluster while removing outliers, the clusters are filtered according to two conditions: we impose lower limits 1) $C_\mathrm{min}$ on the number of elements in the cluster (following our observations that low-quality clusters tend to be smaller) and 2) $P_\mathrm{min}$ on the biggest percentage of elements labeled with the same symbol by the OCR. We attribute this majority label to each of the filtered clusters.

At this stage, the elements of each cluster are aligned character images of a consistent type; the mean image (a vector in $\R^{40\times32}$) is the \emph{character prototype} of the cluster.

\paragraph{Distances between prototypes}

Given a book $b\in\Bcal$ and a symbol $s\in\Sigma$, denote by $\Tcal_s(b)\subseteq\R^{40\times32}$ the set of character prototypes extracted in the previous step and labeled $s$ (see \cref{subfig:method4}). For a pair of books $b_1,b_2\in\Bcal$, let us define
\begin{equation}
    \Sdef(b_1,b_2) \coloneqq \{s\in\Sigma\mid \Tcal_s(b_1)\neq\emptyset,\; \Tcal_s(b_2)\neq\emptyset\} \,.
\end{equation}
For each $s\in\Sdef(b_1,b_2)$, we compute the ``typeface distance'' between $b_1$ and $b_2$ at $s$ as the minimal cosine distance between prototypes:
\begin{equation} \label{eq:cosinedist}
    d_s(b_1,b_2) \coloneqq \min_{(t_1,t_2)\in\Tcal_s(b_1)\times\Tcal_s(b_2)} \cosdist(t_1,\ICA_{t_1}(t_2)) \,
\end{equation}
where $\ICA_{t_1}(t_2)$ is the euclidean transformation of image $t_2$ registering it onto the reference image $t_1$, computed with the Inverse Compositional Algorithm~\cite{baker2001equivalence,briand2018improvements}.

The rationale for taking the minimum cosine distance is that multiple type variants may be used for the same symbol in a book. For example, a compositor might begin using regular-sized types for the first pages, but later switch to smaller types to fit the book within a convenient page count \cite{wilson1975}. As a result, two books from the same printing house may contain different combinations of types—one may use only regular-sized `o's, while another may feature both regular-sized and smaller-sized `o's. Using a minimum avoids unfair penalization of typographical discrepancies arising from compositional choices and captures the best match between the two books.

\subsection{Lower bounds for the number of shared types} \label{ssec:acontrario}

Consider two random variables $b_1$ and $b_2$ modeling the selection of a pair $(b_1,b_2)$ of books with values in $\Bcal$. Then $\Sdef(b_1,b_2)$ is a random subset of $\Symbs$, and $d_s(b_1,b_2)\in\R_{\geq0}\cup\{+\infty\}$ is a random variable (we set $d_s=+\infty$ when $s\notin\Sdef(b_1, b_2)$).

Let $\Symbs_0(b_1,b_2)$ (resp.\ $\Symbs_1(b_1,b_2)$) denote the set of symbols for which $b_1$ and $b_2$ do not (resp.\ do) use the same type; $\Symbs_0$ and $\Symbs_1$ form a random partition of $\Sdef$. Note that $\abs{\Sdef}$ is observable, while $\abs{\Symbs_0}$ and $\abs{\Symbs_1}$ are not. The goal of this step is to output an estimate of the latter quantity $\abs{\Symbs_1(b_1,b_2)}$ using the observation of $d_s(b_1,b_2)$, with clear statistical guarantees.

We make the following assumption of conditional independence; conditioning is crucial, since without it the events pertaining to different symbols are obviously correlated with each other through the possibility that the printers of $b_1$ and $b_2$ are related in some form or another.

\begin{ass} \label{ass:independence}
    For $s\in\Symbs$, let $\Fcal_s$ denote the $\sigma$-algebra generated by the variable $d_s$ and the events $s\in\Symbs_0$, $s\in\Symbs_1$. For each $s$, we assume that the $\Fcal_{s'}$ for $s'\neq s$ are independent of $\Fcal_s$ conditionally to $(s\in\Symbs_0)$.
\end{ass}

We fix a probability threshold $\alpha\in(0,1)$ and define, for $s\in\Symbs$, the ``background'' quantiles $d_{s,\alpha}\in\R_{\geq0}$ of level $\alpha$ by:
\begin{equation} \label{eq:quantile}
    \Prob(d_s(b_1,b_2)<d_{s,\alpha}\mid s\in\Symbs_0(b_1,b_2)) = \alpha \,.
\end{equation}

Now consider the variables
\begin{equation}
    K_\alpha(b_1,b_2) \coloneqq \sum_{s\in\Sdef(b_1,b_2)} \indic_{d_s(b_1,b_2)<d_{s,\alpha}}
\end{equation}
and $K_{i,\alpha}(b_1,b_2)$ defined similarly for $i\in\{0,1\}$ with $\Symbs_i$ in place of $\Sdef$; the realizations of $K_{0,\alpha}$ and $K_{1,\alpha}$ are hidden to the observer, but $K_\alpha=K_{0,\alpha}+K_{1,\alpha}$ is observable.

We want to find the largest $n_1\in\Z_{>0}$ for which we can reject the hypothesis $(\abs{\Symbs_1}<n_1)$. To this end, given  the observed values $k_\alpha$ of $K_\alpha$ and $n$ of $\abs{\Sdef}$, we estimate a $p$-value defined by the probability of $(K_\alpha\geq k_\alpha)$ conditionally to the event $A_{n,n_1}\coloneqq(\abs{\Sdef}=n)\wedge(\abs{\Symbs_1}<n_1)$. Writing by $B_{n,n_1}\coloneqq(\abs{\Sdef}=n)\wedge(\abs{\Symbs_1}=n_1)$, we have
\begin{equation} \label{eq:K0bound}
     \begin{aligned}
    \Prob(K_\alpha\geq k_\alpha\mid A_{n,n_1})     & \leq \max_{n_1'<n_1} \Prob(K_\alpha\geq k_\alpha\mid B_{n,n'_1}) \\
        & \leq \max_{n_1'<n_1} \Prob(K_{0,\alpha}\geq k_\alpha-n'_1\mid B_{n,n'_1})
    \end{aligned}
\end{equation}
where the first inequality results from $A_{n,n_1}=\coprod_{n'_1<n_1}B_{n,n'_1}$ and the second results from $(K_\alpha\geq k_\alpha)\wedge(\abs{\Symbs_1}=n'_1)\subseteq(K_{0,\alpha}\geq k_\alpha-n'_1)$. This upper bound is made tractable by the following result.

\begin{lem} \label{lem:binomial}
    The conditional variable $(K_{0,\alpha}\mid B_{n,n'_1})$ follows the binomial law $\Bin(n-n'_1,\alpha)$.
\end{lem}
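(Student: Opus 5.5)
We condition on the hidden configuration, then compute the law of the count. Since $\Symbs$ is finite, the event $B_{n,n'_1}$ is the disjoint union over all pairs of disjoint subsets $(S_0,S_1)$ of $\Symbs$ with $\abs{S_0}=n-n'_1$ and $\abs{S_1}=n'_1$ of the events $E_{S_0,S_1}\coloneqq(\Symbs_0=S_0)\wedge(\Symbs_1=S_1)$. Indeed, $\Sdef=\Symbs_0\sqcup\Symbs_1$. A mixture of identical laws is again that law, so it suffices to prove that $(K_{0,\alpha}\mid E_{S_0,S_1})\sim\Bin(n-n'_1,\alpha)$ for each such pair with positive probability. Averaging over $(S_0,S_1)$ with weights $\Prob(E_{S_0,S_1}\mid B_{n,n'_1})$ then gives the lemma. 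On $E_{S_0,S_1}$ we have $K_{0,\alpha}=\sum_{s\in S_0}X_s$, where $X_s\coloneqq\indic_{d_s<d_{s,\alpha}}$.

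Second, we identify the marginal law of each $X_s$. By definition \eqref{eq:quantile}, $\Prob(X_s=1\mid s\in\Symbs_0)=\alpha$. What we need, however, is $\Prob(X_s=1\mid E_{S_0,S_1})$. The event $E_{S_0,S_1}$ is the intersection of $(s\in\Symbs_0)$ with events $(s'\in\Symbs_{i(s')})$ for $s'\neq s$, where $i(s')$ is $0$ or $1$ according to whether $s'\in S_0$ or $S_1$. For $s'\notin\Sdef$ we use the event that $s'$ lies in neither set, which is generated by the same two events. Each of these events lies in $\Fcal_{s'}$. By \cref{ass:independence}, the $\sigma$-algebra generated by the $\Fcal_{s'}$, $s'\neq s$, is independent of $\Fcal_s$ given $(s\in\Symbs_0)$. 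Hence conditioning further on these events does not change the law of $X_s$, and $\Prob(X_s=1\mid E_{S_0,S_1})=\alpha$.

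Third, we show joint independence of $(X_s)_{s\in S_0}$ given $E_{S_0,S_1}$, and this is the main obstacle. The assumption is stated one symbol at a time, conditioning only on $s\in\Symbs_0$ and not on the whole configuration, so we have to chain it. For $S_0=\{s_1,\dots,s_m\}$ and $x\in\{0,1\}^m$, we compute $\Prob(X_{s_1}=x_1,\dots,X_{s_m}=x_m,E_{S_0,S_1})$ by peeling off $s_1$. Write the event as $(X_{s_1}=x_1)\cap(s_1\in\Symbs_0)\cap G$, where $G\in\sigma(\Fcal_{s'}:s'\neq s_1)$. Conditional independence given $s_1\in\Symbs_0$ factors the probability as $\Prob(X_{s_1}=x_1\mid s_1\in\Symbs_0)\,\Prob(G\cap(s_1\in\Symbs_0))$, which equals $\alpha^{x_1}(1-\alpha)^{1-x_1}\,\Prob(G\cap(s_1\in\Symbs_0))$. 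The remaining event $G\cap(s_1\in\Symbs_0)$ has the same form with one fewer constraint on the $X$'s. Inducting on $m$, we obtain $\alpha^{\sum x_i}(1-\alpha)^{m-\sum x_i}\Prob(E_{S_0,S_1})$. Here we interpret the assumption as joint independence of $\Fcal_s$ from $\sigma(\Fcal_{s'}:s'\neq s)$, which we state explicitly; the induction relies on it. Dividing by $\Prob(E_{S_0,S_1})$, the $X_s$ are i.i.d.\ Bernoulli$(\alpha)$ given $E_{S_0,S_1}$. Their sum over the $n-n'_1$ elements of $S_0$ is therefore $\Bin(n-n'_1,\alpha)$, and the mixture step concludes the proof.

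A minor technical point is that \eqref{eq:quantile} needs $d_{s,\alpha}$ to exist exactly. This requires the conditional distribution function to be continuous at $d_{s,\alpha}$, which we take as implicit in the definition.
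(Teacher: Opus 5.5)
Your proof is correct and takes essentially the same route as the paper. You split $B_{n,n'_1}$ into the configuration events $(\Symbs_0=S_0)\wedge(\Symbs_1=S_1)$, then use \Cref{ass:independence} (read, as the paper also implicitly does, as joint conditional independence of $\Fcal_s$ from all $\Fcal_{s'}$, $s'\neq s$) to get i.i.d.\ Bernoulli$(\alpha)$ indicators on $S_0$. Your peeling induction spells out what the paper compresses into ``$d_s\indep(d_{s'})_{s'\neq s}$ conditionally to $B_{S_0,S_1}$, hence mutual independence,'' and you make explicit why the parameter stays $\alpha$ under the finer conditioning, which the paper attributes directly to \eqref{eq:quantile}.
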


\begin{proof}
    Since $B_{n,n'_1}$ is the disjoint union of the events $B_{S_0,S_1}\coloneqq(\Symbs_0=S_0)\wedge(\Symbs_1=S_1)$ for $S_0$ and $S_1$ running over disjoint subsets of $\Symbs$ of sizes $n-n'_1$ and $n'_1$, it is enough to prove that $(K_{0,\alpha}\mid B_{S_0,S_1})\sim\Bin(n-n'_1,\alpha)$ for any such $S_0,S_1$.

    By \Cref{ass:independence}, for all $s\in S_0$, one has
    \begin{equation}
        d_s \indep \left((d_{s'})_{s'\neq s},\; \Symbs_0\setminus\{s\}=S_0\setminus\{s\},\; \Symbs_1=S_1\right)
    \end{equation}
    conditionally to $(s\in\Symbs_0)$, hence $d_s \indep (d_{s'})_{s'\neq s}$ conditionally to $(s\in\Symbs_0) \wedge (\Symbs_0\setminus\{s\}=S_0\setminus\{s\}) \wedge (\Symbs_1=S_1)$, with this latter condition being exactly $B_{S_0,S_1}$. Therefore, $(K_{0,\alpha}\mid B_{S_0,S_1})=\sum_{s\in S_0}(\indic_{d_s<d_{s,\alpha}}\mid B_{S_0,S_1})$ is a sum of $n-n'_1$ mutually independent Bernoulli variables which, by \Cref{eq:quantile}, all have parameter $\alpha$.
\end{proof}

When repeating the analysis for a range $\{\alpha_i\}_{i\in I}$ of probability thresholds, a refined $p$-value can be considered: given observations $(k_{\alpha_i})_{i\in I}$,
\begin{equation} \label{eq:multipleAlphas}
    p(n_1, n, (k_{\alpha_i})_{i\in I})
    \coloneqq \Prob\left(\bigwedge\nolimits_{i\in I}(K_{\alpha_i}\geq k_{\alpha_i})\,\middle|\, A_{n,n_1}\right)
    \leq \min_{i\in I} \Prob(K_{\alpha_i}\geq k_{\alpha_i}\mid A_{n,n_1}) \,.
\end{equation}

In the \emph{a contrario} framework \cite{desolneux2007}, we consider the total number $N\coloneqq\abs{\Bcal}(\abs{\Bcal}-1)/2$ of samples (pairs of books) which we analyze and apply a Bonferroni-type correction: we fix a threshold $p$ for the $p$-values such that the expected number $Np$ of $p$-extreme events is below a given threshold $\varepsilon$. Combining \Cref{eq:K0bound,lem:binomial,eq:multipleAlphas}, this leads us to output the following value:
\begin{equation} \label{eq:n1Output}
    \nhat_1(b_1,b_2) \coloneqq \min \{n_1\in\Z_{\geq0} \mid \min_{i\in I}\Prob_{Z\sim\Bin(n-n_1,\alpha_i)}(Z\geq k_{\alpha_i}-n_1)\geq\varepsilon/N \} \,.
\end{equation}

\subsubsection{Estimation of distance thresholds.}

Making observations of $k_\alpha$ in \Cref{eq:n1Output} requires estimating $d_{s,\alpha}$. Without the knowledge of $\Sigma_0$, we make a second assumption, which says that $d_s$ is lower when $s\in\Symbs_1$ than when $s\in\Symbs_0$ (\ie the distances indeed indicate proximity).

\begin{ass}
    Writing $F_X$ for the cumulative distribution function of a real variable $X$, we assume that $F_{d_s\mid s\in\Symbs_0}\leq F_{d_s\mid s\in\Symbs_1}$; equivalently, $F_{d_s\mid s\in\Symbs_0}\leq F_{d_s\mid s\in\Sdef}$.
\end{ass}

We then take a conservative approach by taking $\dhat_{s,\alpha}$ as the order statistic of level $\alpha$ of the sample $\{d_s(b_1,b_2)\mid b_1,b_2\in\Bcal,\; b_1\neq b_2,\; s\in\Sdef\}$, thus giving
\begin{equation} \label{eq:quantileInequality}
    \Prob(d_s<\dhat_{s,\alpha}\mid s\in\Symbs_0) \leq\Prob(d_s<\dhat_{s,\alpha}\mid s\in\Sdef)\approx\alpha \,.
\end{equation}

\section{Experimental setup} \label{sec:experiments}

\subsection{Two datasets of sueltas}

 We apply our method to two corpora of \emph{sueltas} from the 17th century, both of which are subsets of the materials being studied as part of the ISTAE project\footnote{\url{https://istae.uv.es/}}. The first corpus, which is the larger one, corresponds to the use case of large-scale philological analysis; we use it to evaluate our method as a guiding tool for expert analysis. The second corpus serves as a test of our method's usefulness in providing answers to a specific situation of interest identified by experts. Both corpora are available for download at \url{https://doi.org/10.5281/zenodo.18672988}.

\paragraph{First corpus: broad evaluation} \label{ssec:corpus1}

The corpus $\Bcal_1$ comprises 88 \emph{sueltas} supplied by the Biblioteca Nacional de España (BNE), selected without particular filtering based on availability to the authors. The bibliographical data for these \emph{sueltas} is missing. Despite this, the printer attribution is known to philologists for 14 out of the 88 chapbooks: by traditional methods, 2 \emph{sueltas} have been attributed to Lucas Antonio de Bedmar, 4 to Simón Fajardo, 3 to Francisco de Lyra, 2 to Pedro Gómez de Pastrana, and 3 to Manuel de Sande.

\paragraph{Second corpus: focused case study} \label{ssec:corpus2}

The corpus $\Bcal_2$ consists of 21 chapbooks: 5 loose sheets from the Bibliothèque Nationale Universitaire of Strasbourg (BNU) and 16 from the Biblioteca Nacional de España (BNE). As in the first corpus, bibliographical data is missing for these \emph{sueltas}, but most of them have been linked to one of two editorial families: 14 of those from the BNE have been linked to the workshop of Francisco de Lyra, and the 5 from the BNU to Nicol{\'a}s Rodr{\'i}guez de {\'A}brego. In addition, the remaining two from the BNE are suspected to be from the same printers.

This corpus corresponds to a question recently raised about the validity of those attributions. The first family---Lyra's---was established on the basis of characteristics including the compositional pattern, or graphic architecture \cite{cruickshank1989,vega2001,vega2002}. However, it was discovered that the second family had the same graphic architecture as the first's \cite{casariego2025}. One notable difference was found: Rodr{\'i}guez de {\'A}brego's italics were identified as \emph{cicéro cursive} by Robert Granjon, while one of the books from Lyra used Haultin's \emph{médiane italique grasse} \cite{cruickshank1989}. We therefore propose to experiment on this small corpus to revisit the attributions to printing houses, focusing specifically on the analysis of italic types, thereby testing the applicability of our method against a problem of current relevance.

\subsection{Implementation details} \label{ssec:implementation}

Prior to processing, the input document images are rescaled to a common resolution of 150dpi (we assume the original scanning resolution is known). The character prototype extraction step uses the source algorithm of Belzarena \etal\cite{belzarena2025}. In all our experiments, the \emph{a contrario} analysis is performed with the same parameters: $\varepsilon=0.01$ and $\{\alpha_i\}_{i\in I}=\{0.1\cdot 2^k, 0.15\cdot 2^{k+1}\mid k\in\Z, -5\leq k\leq 2\}$.

For the experiment on the italic types in the smaller corpus $\Bcal_2$, we apply the method with a slight variation from the description in \Cref{sec:method}, limited to estimating the quantiles $\dhat_{s,\alpha}$. As it stands, the very limited number of types at play (namely two: Haultin's \emph{médiane italique grasse cursive} and Granjon's \emph{cicéro cursive}) makes it so that, for each $s\in\Symbs$, the pairs $b_1,b_2\in\Bcal_2$ for which $s\in\Symbs_1(b_1,b_2)$ constitute a significant proportion of those for which $s\in\Sdef(b_1,b_2)$. Consequently, the inequalities in \Cref{eq:quantileInequality} are very wide, which hampers the efficiency of the method. To address this, we instead use the quantiles $\dhat_{s,\alpha}$ computed on the corpus $\Bcal_1$: this translates to considering that the background model of $d_s\mid s\in\Symbs_1$ is the same as for $\Bcal_1$, which makes sense since both corpora come from the same body of material.

Our experiments were carried out on  Nvidia Tesla P100 GPU for CharNet, and on 16  CPU cores for the rest of the algorithm, taking around 15 minutes per chapbook (each book consisting of 30 to 40 pages). Running time for the second part of the algorithm (\cref{ssec:acontrario}) is several orders of magnitude below that of the first part (\cref{ssec:measurements}), so the total running time scales linearly with the size of the corpus.

\subsection{Visualization of the outputs} \label{ssec:vis}

The main output of our method consists of the data of $\nhat_1(b_1,b_2)$ for all pairs of books $b_1, b_2\in\Bcal^2$, \ie, a lower bound on the number of similar
types between each pair of books. Additionally, any relevant intermediary values can be made available to the expert end-user; we found the following to be particularly useful:
\begin{itemize}
    \item
    the number $n(b_1, b_2)\coloneqq\abs{\Sdef(b_1,b_2)}$ of analyzable symbols;
    \item
    the list $\{s\in\Sdef(b_1,b_2)\mid d_s(b_1,b_2)<d_{s,\alpha_i}\}$ of candidate symbols (of which the cardinality is $k_{\alpha_i}$), for $i\in I$ corresponding to the maximal output $\nhat$ (see \cref{eq:n1Output});
    \item
    the character prototype images $\Tcal_s(b_1)$ and $\Tcal_s(b_2)$, for $s$ in the above list.
\end{itemize}

Experiments are performed in three settings: roman and italic types for $\Bcal_1$, and italic types for $\Bcal_2$. In the following, we write $\nhat_1^{(1,\rmm)}$, $\nhat_1^{(1,\itt)}$, or $\nhat_1^{(2,\itt)}$ respectively when referring to $\nhat_1$ in a specific setting, and similarly for other values.

Beyond the raw output data, we offer two modes of visualization, detailed below: a matrix and a graph. In line with our objective of enabling large-scale human analysis, these aim to provide a perceptual view of the proximity relationships — according to our method — between the books in the corpus. To take into account the restrictions imposed by the character extraction step, we encode proximity as a weight $w(b_1,b_2)\coloneqq\nhat_1/n\in[0,1]$. It is important to be reminded that, by the nature of the problem, such a notion of proximity inherently includes an element of arbitrariness, as opposed to the precise interpretability of the output value $\nhat_1$.

\paragraph{Matrix representation}

Given a particular ordering of $\Bcal$, the output data of $(\nhat_1(b_1,b_2))_{(b_1,b_2)\in\Bcal^2}$ can be represented as a square matrix. To enhance visual interpretability, the ordering is optimized to maximize the matrix's ``block-diagonality''. This is achieved automatically through optimal leaf ordering in UPGMA hierarchical clustering \cite{upgma}. For our experiment on $\Bcal_2$, the metric provided to the UPGMA algorithm is $1-w^{(2,\itt)}$. When both italics and roman types are performed on the same corpus, it is best to represent results of both experiments with the same ordering; hence, we use the aggregate metric $1-(w^{(1,\rmm)}+w^{(1,\itt)})/2$ for $\Bcal_1$. The resulting matrices are shown in \Cref{fig:matrix1,subfig:matrix2}.

\paragraph{Graph representation}

We plot a planar graph whose vertices are the books in the corpus and whose edges have opacity proportional to their weight $w$. The graph's geometry, \ie the position of the vertices, is automatically set with UMAP~\cite{umap} using the same metric as with the matrix representation above. Since UMAP is fast, the user can tune its parameters \texttt{n\_neighbors} and \texttt{min\_dist}, as results on different corpora may be best visualized using different values (especially for \texttt{n\_neighbors}); we used $10$ and $1$ respectively for $\Bcal_1$, and $20$ and $1$ for $\Bcal_2$. The resulting graphs are shown in \Cref{fig:graph1,subfig:graph2}.

\subsection{Validation of results} \label{ssec:validation}

By definition, the automatic and quantitative evaluation of our method is unfeasible. Indeed, it is not invalidating to find a high---let alone non-zero---value of $\nhat_1(b_1,b_2)$ with two books $b_1$ and $b_2$ from different printing workshops, as different printers bought types from common manufacturers and would even trade them among themselves. Conversely, finding $\nhat_1(b_1,b_2)=0$ for $b_1$ and $b_2$ of the same workshop does not bring about fundamental discredit, given that $\nhat_1$ is designed as a highly confident lower-bound of $\abs{\Symbs_1}$, and that a given printer could use different types over their career (or even in the same book!).

Consequently, our method should be judged by its ability to provide relevant and non-trivial philological information: any evaluation must be qualitative and rely on expert feedback. Our methodology consists of the following steps. First, our image processing team runs the method as described in \Cref{ssec:implementation}, and provides results in the formats detailed in \Cref{ssec:vis}. Then, our philology team, on the basis of prior knowledge and by applying traditional methods when deemed necessary, judges these results according to: 1) whether they impart, perceptually, an accurate or misleading picture of the relationships between \emph{sueltas}; 2) the added value provided by our method outputs with respect to the discovery of new information, \ie the quality of the hypotheses it generates.

\section{Results} \label{sec:discussion}

In this section, we describe the outcome of our experiments. We validate the soundness of the method's outputs by testing known bibliographical data and addressing potential biases related to image quality. Furthermore, we present cases where the application of our method has led to meaningful philological contributions: these include identifying several new printer attributions---both extending existing editorial families and creating new ones---as well as rectifying two erroneous attributions and confirming many others in light of recent doubts raised in the literature.

\subsection{First corpus: general aspects} \label{ssec:discussion1}

Let us first discuss the analysis of the bigger corpus $\Bcal_1$ through both its roman and italic letters. The matrix, in \Cref{fig:matrix1}, and the graph, in \Cref{fig:graph1}, illustrate the relationships between the 88 theater chapbooks, which have potentially varied origins and time periods.

\begin{figure}[ht]
    \centering
    \subfloat[Roman types.]{\includegraphics[width=0.45\columnwidth]{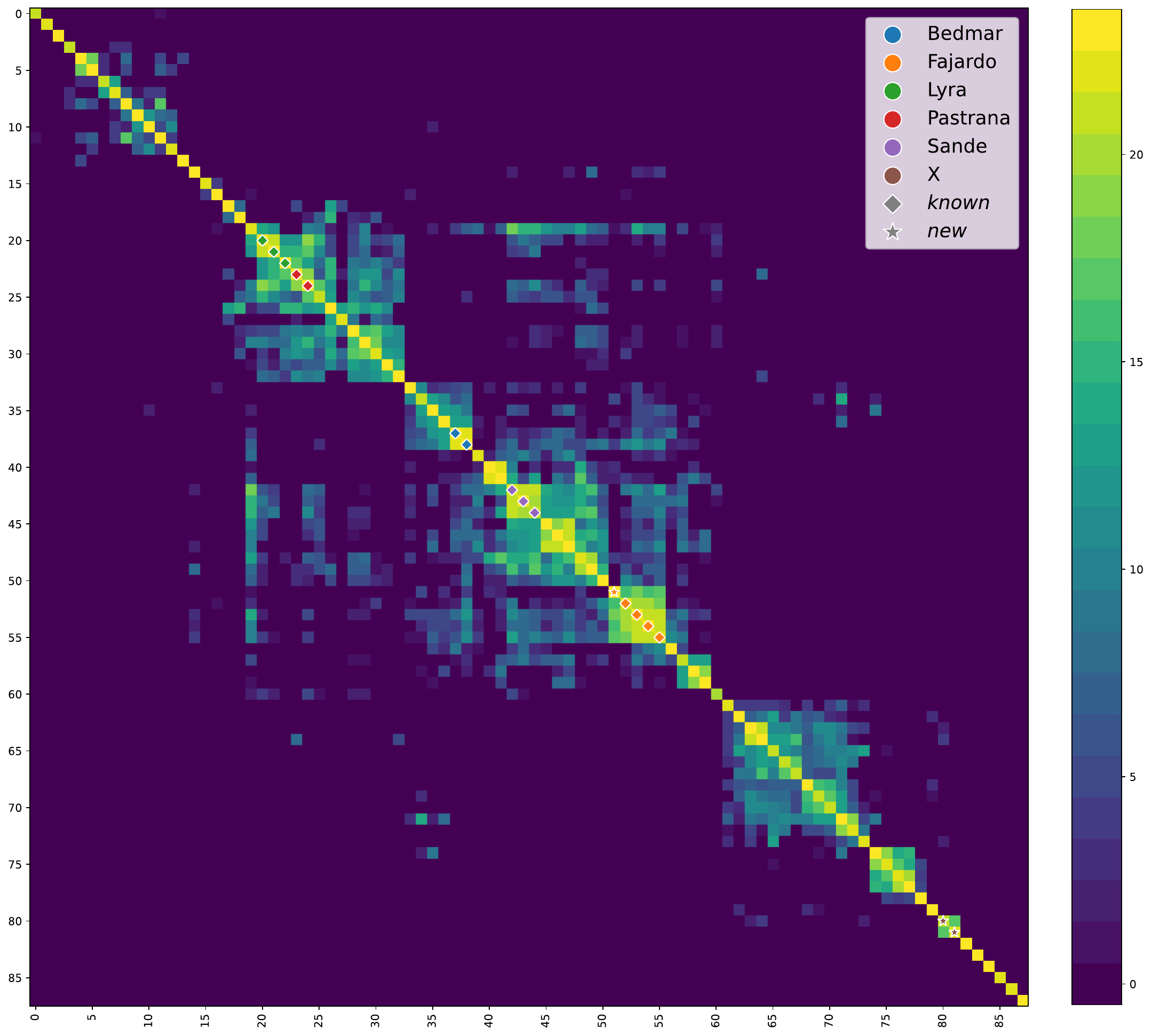}}
    \quad
    \subfloat[Italic types.]{\includegraphics[width=0.45\columnwidth]{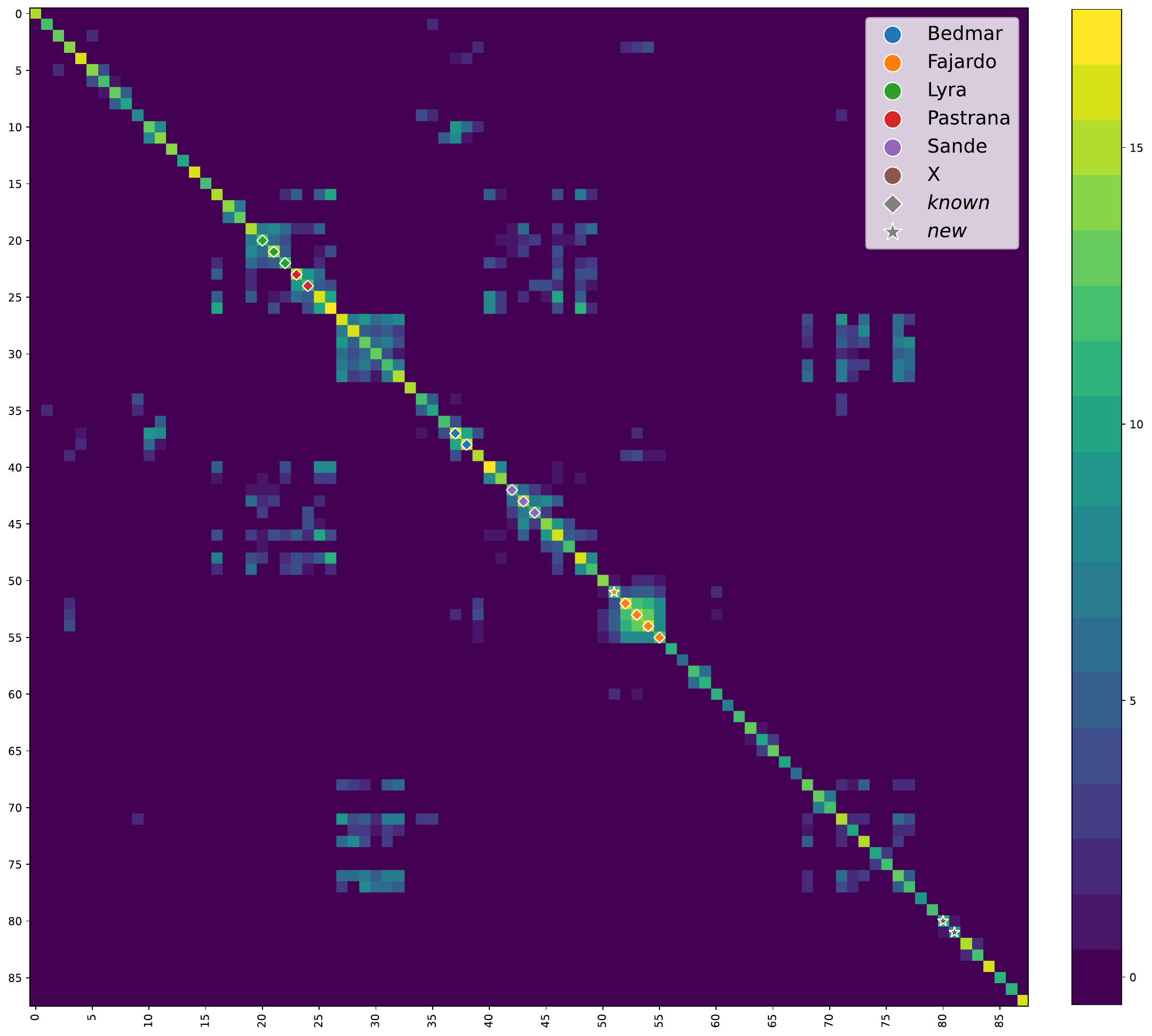}}
    \caption{Matrix representation of $\nhat_1$ on the corpus $\Bcal_1$ (\cref{ssec:corpus1}) ordered by hierarchical clustering; see \cref{ssec:vis}. The \emph{sueltas} with known printer attribution (including those which were hidden to serve as control), as well as those on which new knowledge has been gained thanks to our method (see \cref{ssec:discussion1}), are highlighted on the diagonal; printer `X' denotes a new family whose publishing house is yet unknown.}
    \label{fig:matrix1}
\end{figure}

\begin{figure}[ht]
    \centering
    \subfloat[Roman types.]{\label{subfig:graph1rm}\includegraphics[width=0.45\columnwidth]{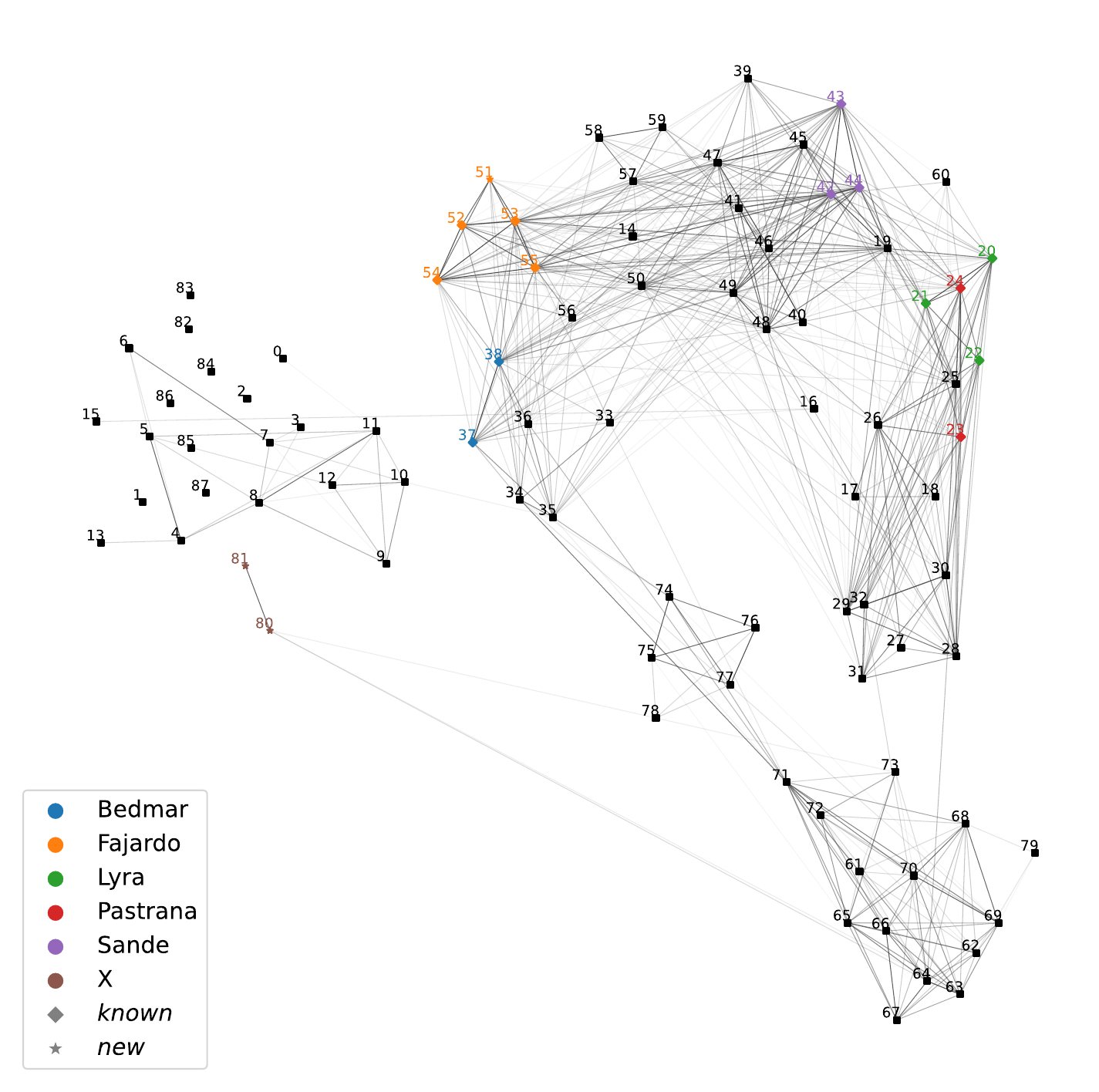}}
    \quad
    \subfloat[Italic types.]{\label{subfig:graph1it}\includegraphics[width=0.45\columnwidth]{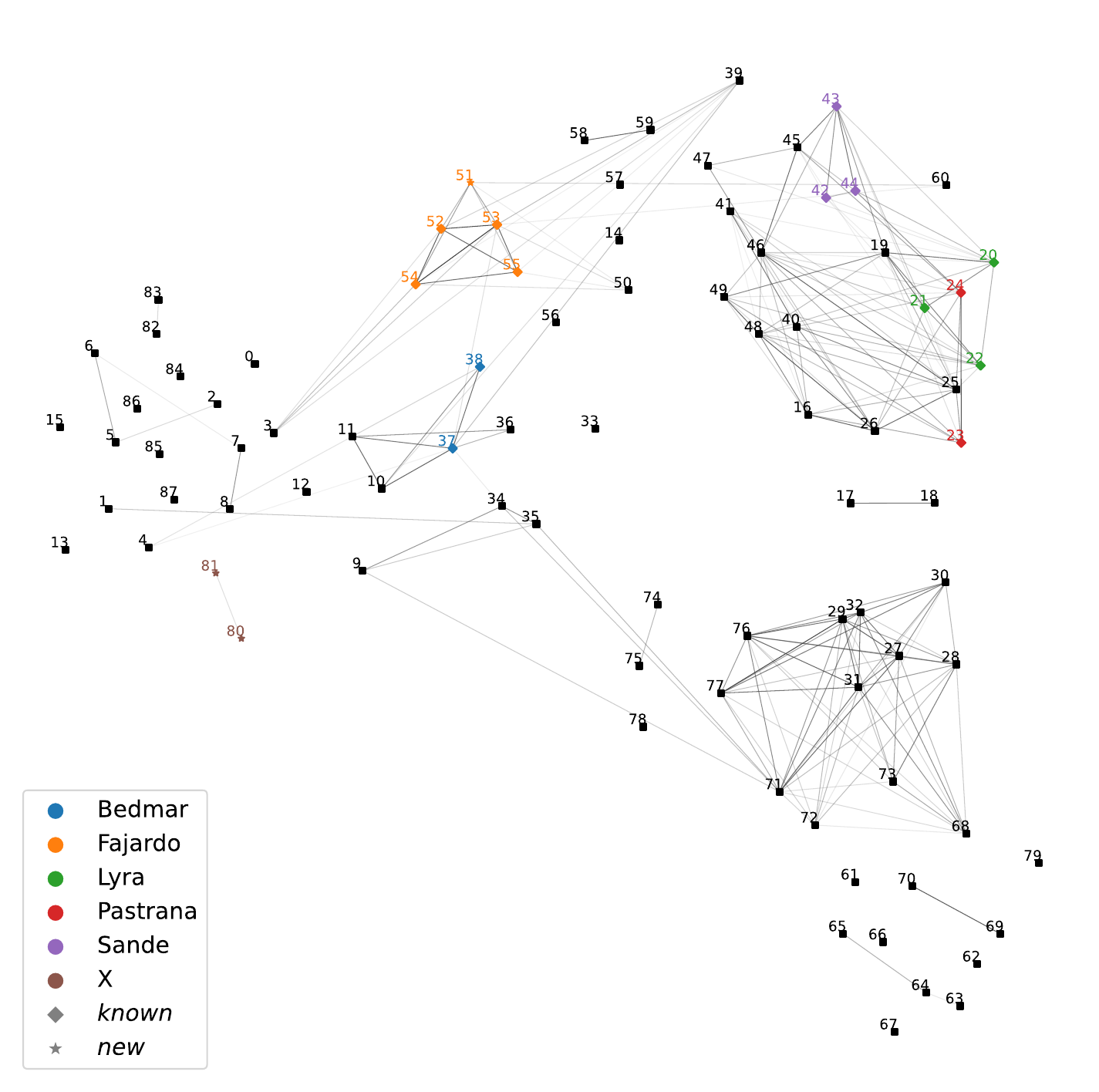}}
    \caption{Graph representation on the corpus $\Bcal_1$ (\cref{ssec:corpus1}); the numbering and highlighting of vertices match \cref{fig:graph1}.}
    \label{fig:graph1}
\end{figure}

\paragraph{Checking known relationships}

We first verify our method's results on the 14 \emph{sueltas} with known attribution: outside expert validation, these serve as the primary ground truth for our analysis. The automatic ordering of books arranges known editorial families into sequences of contiguous books; furthermore, members of the same families are indeed within the same blocks in the matrices, and within the same cliques in the graphs. \Cref{fig:hist1} displays the histogram of outputs values $w=\nhat_1/n$ for these 14 \emph{sueltas}, showing the extent of our method's discriminatory power. Quantitatively, this power can be assessed by treating $w=\nhat_1/n$ as a similarity score on two tasks over this labeled subset: a pairwise same-/different-printer classification over the 91 pairs (14 of them same-printer), and per-book retrieval of same-printer \emph{sueltas}. \Cref{tab:metrics} reports the ROC-AUC and mAP. In line with \Cref{ssec:validation}, these figures quantify the strength of the typographic signal on the few books for which printer labels are available; they are not a validation of attribution, and because printers shared and exchanged types, neither perfect separation nor a perfect score is expected or desirable.

\begin{figure}[ht]
    \centering
    \subfloat[Roman types.]{\label{subfig:hist1rm}\includegraphics[width=0.45\columnwidth]{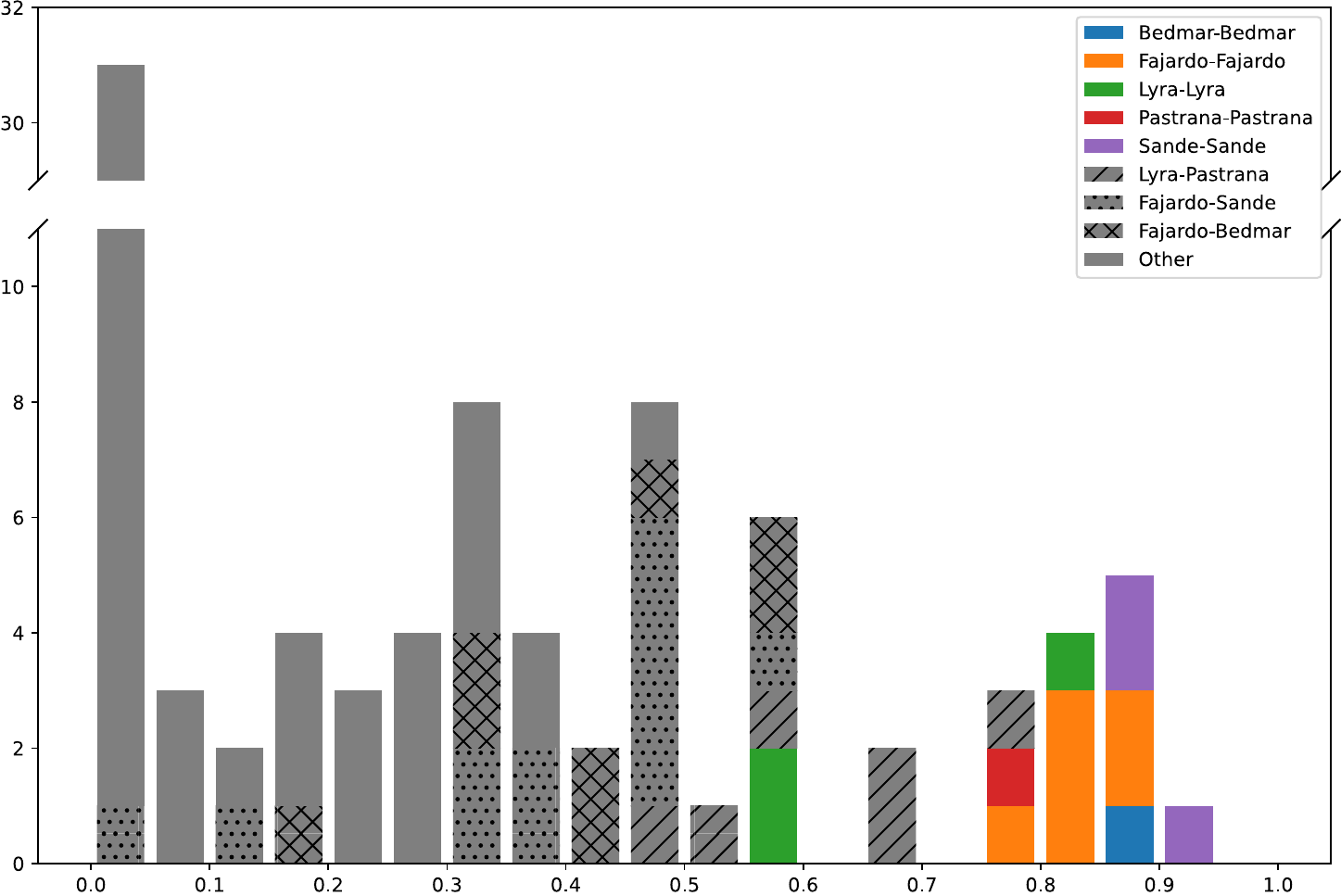}}
    \quad
    \subfloat[Italic types.]{\label{subfig:hist1it}\includegraphics[width=0.45\columnwidth]{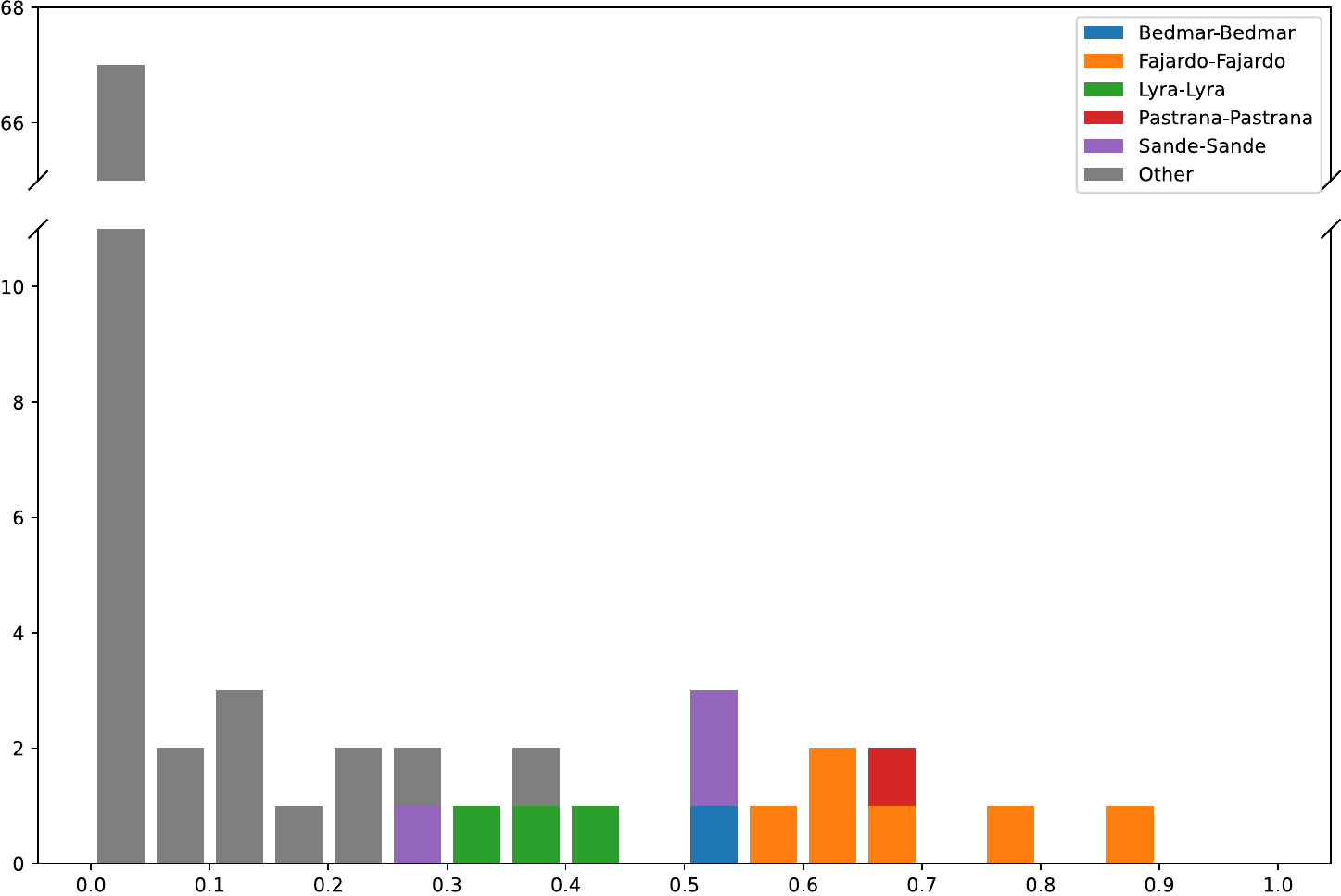}}
    \caption{Histograms of values of $w=\nhat_1/n$ on the subset of $\Bcal_1$ formed by \emph{sueltas} with known attributions.}
    \label{fig:hist1}
\end{figure}

\begin{table}[ht]
    \centering
    \captionsetup{skip=10pt}  
    \begin{tabular}{lcc}
        \toprule
         & AUC & mAP \\
        \midrule
        Roman types  & 0.989 & 0.89 \\ 
        Italic types  & 0.997 & 0.98 \\ 
        \bottomrule
    \end{tabular}
    \caption{Evaluation metrics for $w=\nhat_1/n$ on the subset of $\Bcal_1$ formed by \emph{sueltas} with known attributions (see \cref{fig:hist1}). ROC-AUC is computed for the pairwise same- \vs different-printer classification task and mAP for per-book retrieval. Consistent with the philological expectation that italic types are more discriminative than roman ones, both metrics are higher for italics. Note that a perfect score is neither expected nor desirable: see \cref{ssec:validation}.}
    \label{tab:metrics}
\end{table}

A few pairs of books from different printers have $\nhat_1$ reach significant values, even if not the highest, in the roman case. We find that it is not contradictory with the interpretation of $\nhat_1$ (which, as we remind the reader, does not statute on printer attribution but only on typeface similarity), as these cases consistently align with established philological knowledge. Indeed, it is documented that Lyra and Gómez de Pastrana shared typefaces~\cite[p.976]{cruickshank2004}; that Fajardo has occasionally used Sande's printed material in his own works~\cite{cruickshank1981}. Generally, as illustrated by Bedmar's case---who was briefly active in Seville, where the other aforementioned printers were established, before moving to Madrid---, italics types are considered by philologists to be much more discriminative than roman types, the latter having been in wider circulation throughout the Iberian peninsula; note how this is reflected in our experiments (see \cref{tab:metrics}). Hence our analysis on roman types can indicate loose geographical or temporal tendencies.

\paragraph{New attributions}

We present two novel philological findings emerging from our analysis. Initially suggested by the patterns observed in \Cref{fig:matrix1,fig:graph1}, they were subsequently validated using the established philological methodology.

The first is the case of the book numbered 51, which our results indicate sharing both roman and italic typefaces with known Fajardo books. The subsequent study by human experts revealed shared compositional patterns, letterhead typeface, and type damage, thereby providing unquestionable evidence that Fajardo's family can be extended to this \emph{suelta}.

Similarly, the particular position of books 80 and 81 in the graph (\cref{fig:graph1}) warranted a philological analysis. Beyond the roman typography, the study showed common page architecture, ornaments, and type damage, bringing proof of common printer origin; other clues pointed to clear separation from the other families mentioned in this paper, with a potential temporal gap of almost a century. Thus, these two \emph{sueltas} together create a new editorial family.

\subsection{Second corpus: reviewing attributions from italics} \label{ssec:discussion2}

\begin{figure}[htbp]
    \centering
    \subfloat[]{\label{subfig:matrix2}\includegraphics[height=4.7cm]{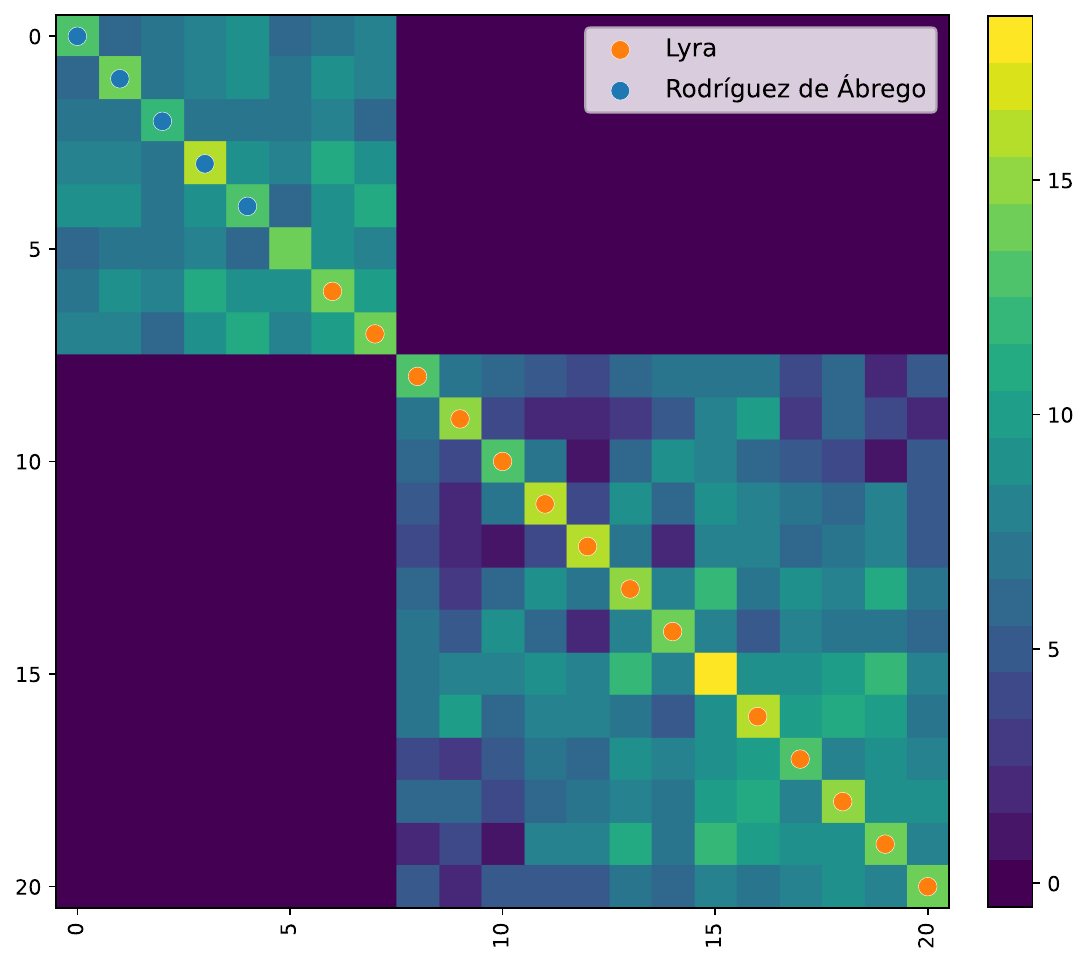}}
    \quad
    \subfloat[]{\label{subfig:graph2}\includegraphics[height=4.7cm]{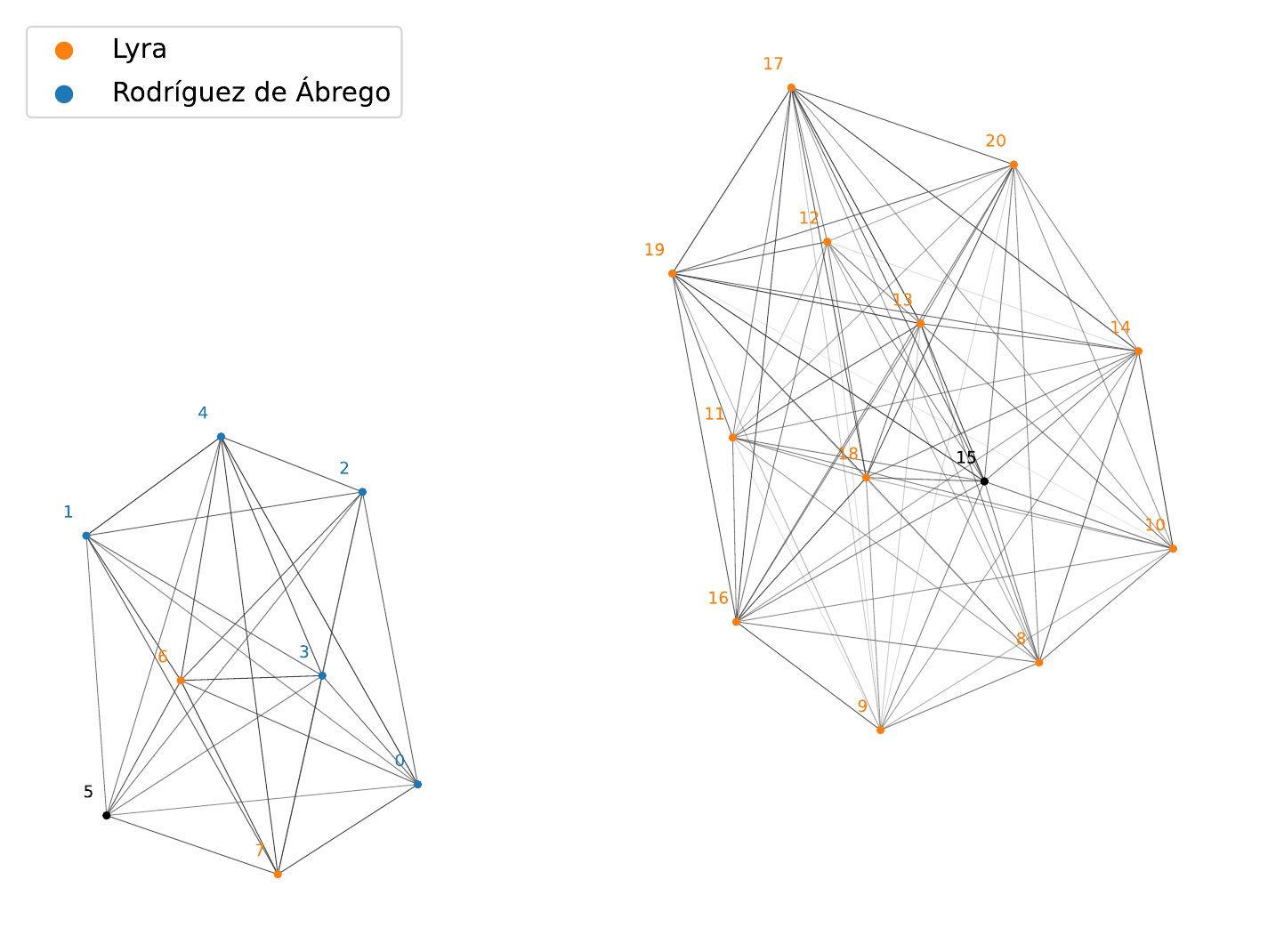}}
    \caption{Matrix representation of $\nhat_1$ (\protect\ref{subfig:matrix2}) and graph representation (\protect\ref{subfig:graph2}) for italic types on the corpus $\Bcal_2$ (\cref{ssec:corpus1}). Books are highlighted with their current printer attribution; as a consequence of these results, two attributions are to be revised, and two new attributions are suggested (see \cref{ssec:discussion2}).}
    \label{fig:matrixgraph2}
\end{figure}

\begin{figure}[t]
    \centering
    \includegraphics[width=\linewidth]{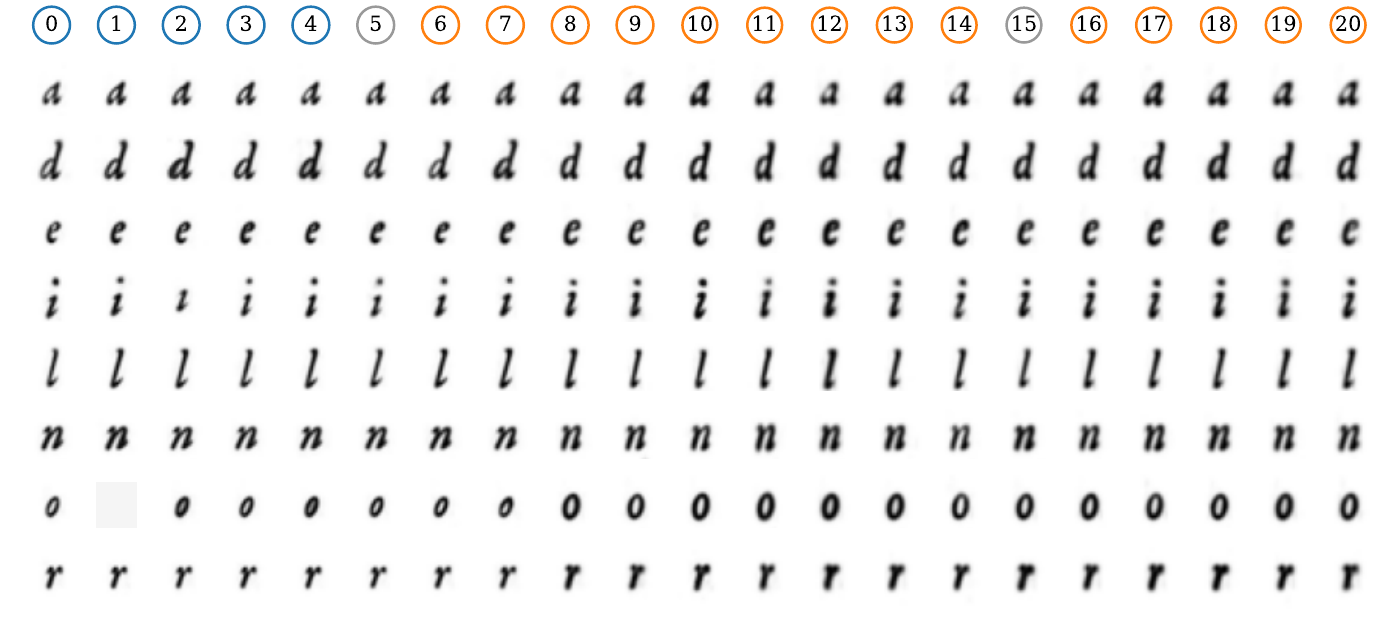}
    \caption{Illustrative summary of results from \cref{ssec:discussion2}: a set of representative character prototypes, for each book of $\Bcal_2$, as extracted by our algorithm. Current printer attribution is signaled as follows: Rodriguez de Ábrego (\textcolor{tabblue}{$\circ$}), Lyra (\textcolor{orange}{$\circ$}), unknown (\textcolor{gray}{$\circ$}). A clear distinction is seen between the Granjon italics (from $0$ until $7$) and the Haultin ones (from $8$ to the end). For example, Granjon is more calligraphic with a stronger slant, Haultin more regular and modular; the 'a' is more open in Granjon, the 'e' has a slanted loop vs. a more horizontal one in Haultin; Haultin is also slightly larger.}
    \label{fig:haultinVgranjon}
\end{figure}

As explained in \ref{ssec:corpus2}, the object of this experiment on the corpus $\Bcal_2$ is to study our method's applicability to a specific research problem motivated by the recent work of Casariego~\cite{casariegoinpress}: revisiting the current attribution of this group of chapbooks to Nicol{\'a}s Rodr{\'i}guez de {\'A}brego or Francisco de Lyra.

\paragraph{Corrections of current attributions}

The numerical results are shown in \Cref{fig:matrixgraph2}. In addition \Cref{fig:haultinVgranjon} displays samples of the character prototypes extracted by the result: it provides a concise preview of each book's ``typeface-wise family'', and at the same time yields an additional layer of interpretability and accountability to our method by allowing the user to check that the statistical analysis relies on correctly processed image data.

Analysis of these results reveals two large publishing families based on the use of italics: a first family to the top right of the graph with the cursive \textit{médiane italique grasse} (Haultin), which has been located in a publishing family of \textit{sueltas} printed by Lyra around 1635, and another to the bottom left with \textit{cicéro cursive} (Granjon), used by Rodríguez de Ábrego between 1647 and 1648~\cite[pp.997--998]{cruickshank2004}. We have verified these families using the traditional methodology.

Within these families, the algorithm detaches books 6 and 7, which were previously associated with the \textit{médiane italique grasse} (Haultin) family, and associates them with the \emph{cicéro cursive} (Granjon) group, a fact confirmed once again by analysis using the traditional method. Similarly, book 15, with no previously attributed publisher, appears to be related to this same type of italics. These typographical links make it necessary to disassociate books 6 and 7 from the Lyra printing press, and, thanks to the additional clue of a title-type breakage, it can be suggested that they came from the Rodríguez de Ábrego presses within the indicated time frame. Likewise, we confirm the linking of book 15 to the cursive \textit{médiane italique grasse} (Haultin).

\paragraph{Potential biases from image sourcing}

The image features of a document can depend heavily on the conditions under which it is preserved and digitized, which can vary from one archival institution to another. In this experiment, the editions in $\Bcal_2$ held by the Biblioteca Nacional de España (BNE) were scanned at a lower resolution than those from the Bibliothèque Nationale Universitaire of Strasbourg (BNU). Ruling out the possibility that such differences could bias our method's results is of the utmost importance if the method is to be deployed at scale across several archives.

It is therefore of particular interest to note that, in our method's results, no distinction appears between the three chapbooks newly attributed to Rodr{\'i}guez de {\'A}brego, which are from the BNE, and the five chapbooks previously associated with him, which are from the BNU, and they are all fully disconnected from the thirteen other \emph{sueltas}, which are from the BNE. This fact, together with our full corroboration of these results through a traditional philological analysis, gives us confidence that the algorithm does not group items based on the quality or methods of conservation and reproduction employed by a particular library, but rather distinguishes among different italic typefaces.

\section{Conclusion} \label{sec:conclusion}

We developed a method to estimate the proximity of typefaces between printed documents, based on image processing algorithms and statistical analysis. The proposed algorithmic tool can bring significant acceleration to the methodology for the analysis of ancient books: as shown in this paper with the study of two corpora of 17th century \emph{sueltas}, our method succeeds in assisting philologists in reconstructing relationships between printers, many of which are entirely unknown, by indicating potential associations to investigate, at a scale which, while still moderate, would already be overwhelming to the current manual and time-consuming processes.

Given that our method is built for scalability, the limiting factor appears to be the centralized collection of chapbook scans for processing. Looking ahead, we are building a larger corpus, on which our analysis should provide a more comprehensive picture of the Spanish 17th-century \textit{sueltas}. It will open up the prospect of establishing a general overview of the distribution of typefaces across Iberian printing, laying the groundwork for creating an atlas of Hispanic typography, a resource that does not currently exist and about which we know very little; and it will shed light on the dynamics of the publication of theatrical texts in early modern presses making possible the reconfiguration of the editorial and commercial map of books published in this century.


\begin{credits}
    \subsubsection{\ackname}
    The research that originated the results presented in this publication was partly supported by the Agencia Nacional de Investigación e Innovación of Uruguay\footnote{Under scholarship {\scriptsize POS\_EXT\_2023\_2\_180123}.}, by the France 2030 CollabNext project, and by the Spanish Ministry of Science (State Research Agency) through the ISTAE project\footnote{Loose prints from ancient Spanish theatre: integrated database of classical Spanish theatre. Second phase (ASODAT.\ Third phase) -- {\scriptsize KNOWLEDGE GENERATION 2022} -- Non-oriented research projects, reference {\scriptsize PID2022-136431NB-C66}. \url{https://istae.uv.es/}}. The work of J.-M.\ Morel is partially supported by RGC-GRF project 11309925, Mathematical Formalization of GIS. The experiments presented in this paper used ClusterUY \cite{clusterUy}. S. Mowlavi and D. Belzarena are researchers at the Centre Borelli, which is a member of ENS Paris-Saclay, Université Paris-Saclay, Université Paris Cité, CNRS, SSA, and INSERM. The authors are grateful to Aitor Artola, Natalia Bottaioli, Gabriele Facciolo, Marina Gardella, Roy He, Camilo Mari{\~n}o, Boshra Rajaei, and Ignacio Ram{\'i}rez for fruitful discussions.
    
    \subsubsection{\discintname}
    The authors have no competing interests to declare that are relevant to the content of this article.
\end{credits}
%
%
%
\bibliographystyle{splncs04}

\end{document}